\def\Figref#1{Figure~\ref{#1}}
\def\Secref#1{Section~\ref{#1}}
\def\eqref#1{equation~\ref{#1}}
\def\Eqref#1{Equation~\ref{#1}}
\def\1{\bm{1}}
\DeclareMathAlphabet{\mathsfit}{\encodingdefault}{\sfdefault}{m}{sl}
\SetMathAlphabet{\mathsfit}{bold}{\encodingdefault}{\sfdefault}{bx}{n}
\DeclareMathOperator{\Tr}{Tr}
\newtheorem{theorem}{Theorem}
\newtheorem{definition}[theorem]{Definiton}
\newcommand{\appref}[1]{App. \ref{#1}}
\newcommand{\Tabref}[1]{Tab. \ref{#1}}
\newcommand{\tabref}[1]{Tab. \ref{#1}}
\newcommand{\lemref}[1]{Lemma \ref{#1}}
\newcommand{\model}{RotRNN\xspace}
\newcommand{\reals}{\mathbb{R}}
\newcommand{\comps}{\mathbb{C}}
\newcommand{\diag}{\mathrm{diag}}
\newcommand{\trace}[1]{\mathrm{Tr}\left[{#1}\right]}
\newcommand{\dimu}{{\mathcal{D}_u}}
\newcommand{\dimx}{{\mathcal{D}_x}}
\newcommand{\dimy}{{\mathcal{D}_y}}
\definecolor{mydarkblue}{rgb}{0,0.08,0.45}
\definecolor{mathematicablue}{rgb}{0.11, 0.25, 0.467}
\definecolor{myyellow}{rgb}{1.0, 0.49, 0.0}
\definecolor{myorange}{rgb}{0.95, 0.35, 0.14}
\newcommand{\beq}{\begin{equation}}
\newcommand{\eeq}{\end{equation}}
\newcommand{\av}[1]{\mathbb{E}\left[{#1}\right]}
\newcommand{\trans}{^{\top}}
\newcommand{\br}[1]{\left( {#1} \right)}
\definecolor{mydarkblue}{rgb}{0,0.08,0.45}
\definecolor{mathematicablue}{rgb}{0.11, 0.25, 0.467}
\definecolor{myyellow}{rgb}{1.0, 0.49, 0.0}
\definecolor{myorange}{rgb}{0.95, 0.35, 0.14}
\definecolor{codegreen}{rgb}{0,0.6,0}
\definecolor{codegray}{rgb}{0.5,0.5,0.5}
\definecolor{codepurple}{rgb}{0.58,0,0.82}
\definecolor{backcolour}{rgb}{0.95,0.95,0.92}
\lstdefinestyle{mystyle}{
    backgroundcolor=\color{backcolour},   
    commentstyle=\color{codegreen},
    keywordstyle=\color{magenta},
    numberstyle=\tiny\color{codegray},
    stringstyle=\color{codepurple},
    basicstyle=\ttfamily\footnotesize,
    breakatwhitespace=false,         
    breaklines=true,                 
    captionpos=b,                    
    keepspaces=true,                 
    numbers=left,                    
    numbersep=5pt,                  
    showspaces=false,                
    showstringspaces=false,
    showtabs=false,                  
    tabsize=2
}
\newcommand{\vectheta}{\boldsymbol{\theta}}
\newcommand{\dimh}{{\mathcal{D}_h}}
\title{\model: Modelling Long Sequences with Rotations}
\author{%
  Kai Biegun\thanks{Equal Contribution.} \\
  UCL AI Centre\\
  London, UK \\
  \texttt{kai.biegun.20@ucl.ac.uk} \\
  \And
  Rares Dolga$^*$ \\
  UCL AI Centre \& UiPath\\
  London, UK \\
  \texttt{rares.dolga@uipath.com} \\
  \And
  Jake Cunningham \\
  UCL AI Centre \\
  London, UK \\
  \texttt{jake.cunningham.21@ucl.ac.uk} \\
  \And
  David Barber \\
  UCL AI Centre \& UiPath \\
  London, UK \\
  \texttt{david.barber@uipath.com}
}
\begin{document}

\maketitle

\begin{abstract}
Linear recurrent neural networks, such as State Space Models (SSMs) and Linear Recurrent Units (LRUs), have recently shown state-of-the-art performance on long sequence modelling benchmarks. Despite their success, their empirical performance is not well understood and they come with a number of drawbacks, most notably their complex initialisation and normalisation schemes. In this work, we address some of these issues by proposing \model{} -- a linear recurrent model which utilises the convenient properties of rotation matrices. We show that \model provides a simple and efficient model with a robust normalisation procedure,
and a practical implementation that remains faithful to its theoretical derivation. \model also achieves competitive performance to state-of-the-art linear recurrent models on several long sequence modelling datasets.
\end{abstract}

\section{Introduction}

Long sequence modelling is a notoriously difficult domain in machine learning due to the need to capture long-range dependencies between input data. 
Typical sequence models, such as Transformers \citep{vaswani2017attention} and Recurrent Neural Networks (RNNs) \citep{rumelhart1986learning, hochreiter1997long, cho2014properties, koutnik2014clockwork}, fail to perform well on these tasks. In the case of Transformers this is due to poor inductive biases and quadratic scaling of computational complexity with sequence length,
and in the case of non-linear RNNs it is caused by vanishing and exploding gradients.
Recently, State Space Models (SSMs) \citep{gu2021efficiently, smith2023simplified, gupta2022diagonal} have emerged as the state-of-the-art framework for learning on long-sequences of data.
The S4 model \citep{gu2021efficiently}, inspired by linear time invariant dynamical systems, utilises a linear recurrent layer with HiPPO initialisation \citep{gu2020hippo} to solve the vanishing and exploding gradient problem of classical RNNs.
Moreover, the computational complexity of S4 scales linearly in time with sequence length, and thus circumvents the quadratic computational scaling of Transformers.
Interestingly, the linear recurrence of S4 hidden state can be viewed as both a linear RNN for fast inference and as a Convolutional Neural Network (CNN) for efficient parallel training.

Despite the mathematical elegance of the S4 derivation, the question of whether such careful initialisation is required remains open.
Indeed, several subsequent works suggested that the specific initialisation and discretisation methods that theoretically motivated S4 may not be necessary for highly performant SSMs \citep{gupta2022diagonal, gupta2022simplifying, gu2022parameterization, smith2023simplified, orvieto2023resurrecting}. 
These findings led to the development of the Linear Recurrent Unit (LRU) \citep{orvieto2023resurrecting}, which showed that competitive empirical performance on long sequence modelling tasks can be achieved by making some small modifications to a standard linear RNN, without using the theoretical insights from SSMs.
While the LRU is conceptually simpler than prior works, the theoretical motivation does not necessarily reflect the practical implementation of the algorithm, leading to more unanswered questions as to why SSMs and the LRU are stable and performative on long sequence modelling tasks.

The LRU and other linear recurrent layers have been used as basic building blocks for more complex sequence modelling architectures \citep{gu2023mamba, de2024griffin}.
Motivated by the wide adoption of such models \citep{zhuvision, ma2024u, xing2024segmamba, patro2024mamba}, we extend the body of work on linear RNNs by proposing a novel recurrent block, deriving a conceptually simple but mathematically principled way of performing linear recurrence by using rotation matrices. By parametrising the recurrent state matrix as a rotation, we are able to provide more stable normalisation than prior works, and allow for a robust implementation that faithfully reflects its theoretical motivation.
Importantly, our model remains consistent with the theory throughout training, and is not purely used to find a good initialisation of the recurrent matrix \citep{gu2021efficiently, gupta2022diagonal, smith2023simplified}.
Moreover, we derive a mathematical equivalence between a special case of the LRU and our model, in the hope that it will help shed light on some of the LRU's internal mechanics.
We summarise our main contributions as follows:
\begin{itemize}[leftmargin=*,topsep=0pt,itemsep=2pt]
    \item We propose a novel linear RNN layer where the recurrent state matrix \textbf{$A$ is parametrised as a rotation}, with the resulting algorithm faithfully reflecting the theoretical motivation throughout training.
    \item We present a method for computing \textbf{efficient matrix powers of parametric rotations} to enable fast linear recurrence with rotation matrices.
    \item We use this new formulation to derive a principled normalisation procedure which retains a \textbf{constant expected hidden state magnitude} that holds throughout training.
    \item We show \textbf{competitive performance with the state-of-the-art} on long sequence benchmarks, and show that the improved normalisation of our model holds on practical tasks.
\end{itemize}

\section{Background}

\subsection{State Space Models} 

SSMs \citep{gu2021efficiently, smith2023simplified, gupta2022diagonal} are derived from time-invariant continuous-time linear ordinary differential equations (ODEs), of the form 
\begin{align}
\dot{x}(t) &= A' x(t) + B'u(t) \nonumber \\
y(t) &= C x(t) + D u(t) \nonumber
\end{align}
where $B'\in \mathbb{R}^{\dimx\times \dimu}$ is the input matrix, $A' \in \mathbb{R}^{\dimx\times \dimx}$ is the state matrix, $C \in \mathbb{R}^{\dimy\times \dimx}$ and $D\in\reals^{\dimy\times \dimu}$ are the output matrices and $u(t) \in \mathbb{R}^\dimu$ is the continuous-time input.

Under a constant sampling rate with a given step size $\Delta>0$, such systems can be discretised using Zero-Order Hold (ZOH) or Bilinear discretisation. Under the ZOH method, the resulting discrete system can be expressed by the following recursion:
\begin{align}
\label{eq:ssm:discrete}
\begin{aligned}
x_t &= A x_{t-1} + B u_{t} \\
y_t &= C x_t + D u_t
\end{aligned}
\end{align}
where $u_t$ are the sampled input signals and $A = \exp(\Delta A')$ and $B=(A-I)A'^{-1}B'$ are discretised versions of the state and input matrices, respectively.
Importantly, this recurrence relation can also be unrolled and written as a convolution over the inputs,
\begin{align}
\label{eq:ssm_recurrence}
\begin{aligned}
    x_t &= \sum_{k=1}^t A^{t-k}B u_k.
\end{aligned}
\end{align}
This duality of recurrence and convolution allows for efficient parallel computation of sequence outputs during training, and fast state updating during inference. \Eqref{eq:ssm:discrete} is the foundation of the SSM layer in S4 \citep{gu2021efficiently} and its variants \citep{gupta2022diagonal,smith2023simplified,gu2023mamba}.
The matrix $A$ is initialised using HiPPO theory \citep{gu2020hippo}, whose derivation follows from the theory of optimal polynomial projections.

\subsection{Linear Recurrent Units} \label{sec:lru}
Instead of discretising a continuous-time ODE, the LRU \citep{orvieto2023resurrecting} achieves competitive empirical performance with clever parametrisation and normalisation of linear RNNs.
The derivation of the LRU is motivated by the observation that the recurrent matrix $A\in\reals^{\dimx\times \dimx}$ can be written (up to arbitrarily small perturbation of the entries \citep{axler2024linear}) as
\beq \label{eq:eig_decomp}
A = P\Lambda P^{-1}
\eeq
where $\Lambda = \diag(\lambda_1, \dots, \lambda_\dimx) \in \comps^{\dimx\times \dimx}$ is the diagonal matrix of eigenvalues and $P\in \comps^{\dimx\times \dimx}$ is a complex-valued invertible matrix of eigenvectors.
This diagonalised parametrisation is necessary to allow for fast computation of matrix powers, which is required in linear recurrent models (see \Eqref{eq:ssm_recurrence}).
Premultiplying both sides of \Eqref{eq:ssm_recurrence} by $P^{-1}$, and plugging in \Eqref{eq:eig_decomp}, we obtain
\begin{align}
\begin{aligned}
\Tilde{x}_t &= \sum_{k=1}^t \Lambda^{t-k} \Tilde{B}u_k \\
y_t &= \tilde{C} \Tilde{x}_t + D u_t
\label{eq:lru}
\end{aligned}
\end{align}
where $\Tilde{x}_t = P^{-1}x_t$, $\Tilde{B} = P^{-1} B$, $\Tilde{C} = C P$. The LRU aims to directly learn the matrices $\Tilde{B}$ and $\Tilde{C}$, along with the eigenvalues $\lambda_j = \nu_j e^{i\theta_j}$, for learnable parameters $\nu_j,\theta_j\in\reals$.

\subsection{Drawbacks of Prior Works}
These prior methods for tackling long sequence modelling with linear recurrence have several drawbacks. In the case of SSMs, a complicated theoretical derivation based on polynomial projections is required to initialise the recurrent HiPPO matrix. However, this is purely used as an initialisation procedure. The inner workings of SSM models throughout training is not well understood, and more research is needed to uncover why deviation from optimal polynomial projections of functions improves results and remains stable throughout learning.
Indeed, as shown in the LRU, it is in fact unnecessary to use such theoretical motivation to achieve strong performance on long sequence benchmarks.

What actually goes on under the hood of the LRU, though, is also not entirely clear. The motivation of the LRU stems from constraining classical linear RNNs, but this is not fully reflected in the final proposed algorithm.
Firstly, eigenvalues of the real matrix $A$ come in conjugate pairs, but this is not enforced in the LRU (as it is in, for example, S5 \citep{smith2023simplified}).
Moreover, there is no constraint ensuring that $y_t = \Tilde{C}\Tilde{x}_t = C P P^{-1} x_t = C x_t$ is real-valued -- i.e. that the $P$ and $P^{-1}$ components of the learned $\Tilde{C}$ and $\Tilde{x}_t$ are consistent. Instead, the authors simply take the real part of the resulting complex $y_t$. 
Additionally, the LRU is normalised (at initialisation) by ensuring expected convergence in the limit of infinite sequence length (see \ref{sec:rotssm_lru_mh}), but in practice a learnt normaliser is used, and does not necessarily result in desirable or consistent hidden state magnitudes (\Figref{fig:norms}).

In this work, we aim to overcome this mis-match between theoretical motivation and practical implementation by proposing a novel linear recurrent model using rotation matrices. Our algorithm is conceptually simple to understand, efficient to compute, robust to exploding hidden state norms, and, importantly, faithfully reflects the mathematical principles that underpin its motivation.

\section{\model} \label{sec:RotRNN}

\begin{figure}[t]
    \centering
    \includegraphics[width=1\textwidth]{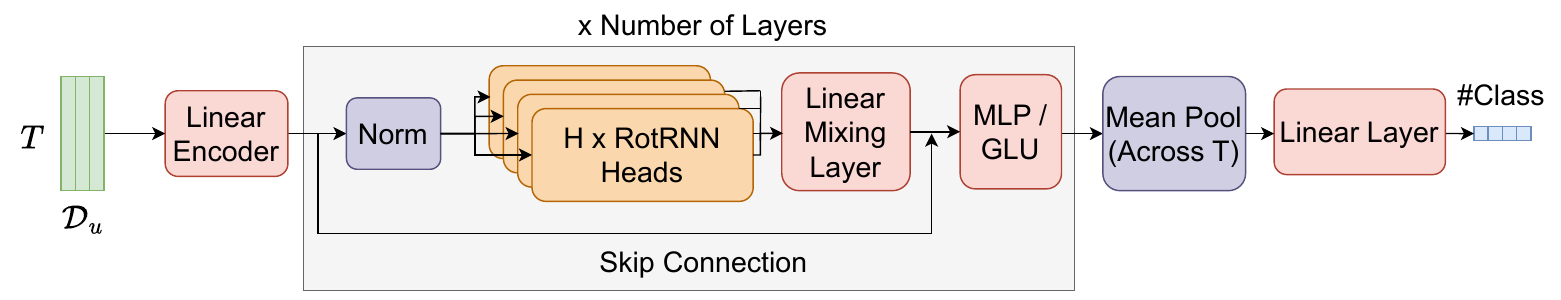}
    \caption{Full neural network architecture of the \model. Here $T$ denotes the length of the input sequence, and $\dimu$ denotes the number of channels in the input data.} 
    \label{fig:RotRNN-architecture}
\end{figure}

The key component of our model, which we call the \textbf{Rot}ational \textbf{R}ecurrent \textbf{N}eural \textbf{N}etwork (\model), is the parametrisation of the recurrent state matrix $A$ as a rotation matrix. 
The reason for this choice is three-fold:
\begin{enumerate}[i), leftmargin=*,topsep=0pt,itemsep=2pt]
    \item Rotation matrices can be generated smoothly from any real-valued matrix, making them robust to initialisation and easy to constrain during training (\Secref{sec:rotations}).
    \item Rotations can be easily decomposed for fast computation of matrix powers, which is needed in linear RNNs (\Secref{sec:fast_powers}).
    \item The orthogonality of rotation matrices, and the fact that their eigenvalues lie on the unit circle, allows us to derive a simple normalisation scheme that retains a constant expected hidden state norm throughout training (\Secref{sec:RotRNNnormalisation}).
\end{enumerate}

\subsection{Parametrising Rotation Matrices} \label{sec:rotations}
We first consider the problem of parametrising $A$ as a rotation matrix in the linear RNN formulation.
The group comprising all rotation matrices in $\reals^{N\times N}$ is known as the special orthogonal group, $SO(N)$, defined as follows:
\begin{align}
SO(N) = \{Q \;|\; Q\in\reals^{N\times N}, Q^\top Q = Q Q^\top =I, \det (Q) = 1\}.
\end{align}
To ensure that a learnable matrix $A$ remains in $SO(N)$ throughout training, instead of learning a constrained $A$ directly we can learn a general weight matrix $M\in\reals^{N\times N}$ and smoothly map $M$ onto $SO(N)$. To do this, we make use of the following lemma:\\

\begin{restatable}[]{lemma}{expmlemma}
    \label{lemma:expm}
    Let $M\in\reals^{N\times N}$, let $S = M - M^\top$, and define $\exp(S) := \sum_{k=0}^\infty \frac{1}{k!} S^k$ as the matrix exponential. Then
    $A = \exp{(S)} \in SO(N)$.
\end{restatable}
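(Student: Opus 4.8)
The plan is to exploit the single structural fact that drives everything: $S = M - M\trans$ is skew-symmetric, i.e.\ $S\trans = -S$. Given this, membership in $SO(N)$ splits into two independent checks — orthogonality ($A\trans A = A A\trans = I$) and unit determinant ($\det A = 1$) — which I would handle separately.

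First, for orthogonality, I would establish that transposition commutes with the matrix exponential. Since the transpose map is linear and continuous, it passes through the absolutely convergent power series term by term, giving $\exp(S)\trans = \sum_{k=0}^\infty \frac{1}{k!}(S\trans)^k = \exp(S\trans)$. Substituting $S\trans = -S$ yields $A\trans = \exp(-S)$. The key observation is then that $S$ and $-S$ commute, so the exponential addition law $\exp(X)\exp(Y) = \exp(X+Y)$ — valid precisely for commuting $X, Y$ — applies and gives $A\trans A = \exp(-S)\exp(S) = \exp(0) = I$, and symmetrically $A A\trans = I$. This shows $A$ is orthogonal.

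Second, for the determinant, I would invoke the standard identity $\det(\exp(S)) = \exp(\Tr(S))$. A skew-symmetric matrix has vanishing diagonal, since $S_{ii} = -S_{ii}$ forces $S_{ii} = 0$, so $\Tr(S) = 0$ and hence $\det A = \exp(0) = 1$. Combined with orthogonality, this places $A$ in $SO(N)$, completing the argument.

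The calculation is short; the only places needing care are the two analytic facts I am leaning on — that transposition can be moved inside the infinite series, and that $\exp(X + Y) = \exp(X)\exp(Y)$ requires $XY = YX$, which is why I must explicitly note that $S$ commutes with $-S$ before collapsing the product. If one prefers to avoid citing $\det(\exp(S)) = \exp(\Tr(S))$, the determinant claim can instead be obtained by a connectedness argument: the map $t \mapsto \exp(tS)$ is a continuous path in $O(N)$ from $I$ at $t = 0$ to $A$ at $t = 1$, and since $\det$ is continuous and takes values in $\{\pm 1\}$ on $O(N)$, it must remain equal to its value $1$ at $t = 0$ throughout.
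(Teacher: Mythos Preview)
Your proposal is correct and matches the paper's proof essentially step for step: both establish orthogonality via $\exp(S)\trans = \exp(S\trans) = \exp(-S)$ together with the addition law for commuting exponents, and both obtain $\det A = 1$ from $\det(\exp(S)) = \exp(\Tr S)$ with $\Tr S = 0$. Your added connectedness alternative for the determinant is a nice bonus not in the paper, but the primary argument is the same.
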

\begin{proof}
    See \appref{proof:expm}.
\end{proof}
The matrix exponential map is surjective from skew-symmetric matrices onto $SO(N)$ \citep{rohan2013some}, so this parametrisation is sufficiently general for learning arbitrary rotation matrices.

\subsection{Efficient Rotational Recurrence}\label{sec:fast_powers}
Unfortunately, computing the convolutional form of linear recurrent layers (\Eqref{eq:ssm_recurrence}) during training involves taking matrix powers of $A$, which is generally slow for high-dimensional, dense matrices.
To make this computation more efficient, we utilise the structure of rotation matrices, using the following result to decompose the dense rotation matrix into block-diagonal form.\\
\begin{restatable}[]{lemma}{rotdecomp}
    \label{lemma:gen-rot-block-decomp}
    Let $P\in O(N)$ be an orthogonal matrix and let $\Theta\in SO(N)$ be a block-diagonal rotation matrix. Then $A=P\Theta P\trans \in SO(N)$. Moreover, any rotation matrix can be written in this form \citep{gallier2003computing}.
\end{restatable}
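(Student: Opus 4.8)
The plan is to prove the two assertions separately: first the forward direction, that $A = P\Theta P\trans$ lies in $SO(N)$ whenever $P\in O(N)$ and $\Theta\in SO(N)$ is block-diagonal, which is a direct computation from the definitions; and then the converse, that every element of $SO(N)$ admits such a decomposition, which rests on the real block-diagonal normal form of orthogonal matrices.

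For the forward direction, I would verify the two defining properties of $SO(N)$ in turn. To check orthogonality, compute $A\trans A = P\Theta\trans P\trans P\Theta P\trans$; using $P\trans P = I$ (since $P\in O(N)$) and $\Theta\trans\Theta = I$ (since $\Theta\in SO(N)$), this collapses to $P\Theta\trans\Theta P\trans = PP\trans = I$, and the same argument gives $AA\trans = I$. For the determinant, I would use multiplicativity, $\det(A) = \det(P)\det(\Theta)\det(P\trans) = \det(P)^2\det(\Theta)$; since $\det(P) = \pm 1$ for an orthogonal matrix and $\det(\Theta) = 1$, we obtain $\det(A) = 1$. Together these give $A\in SO(N)$, completing this half.

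For the converse, I would appeal to the real canonical (block-diagonal) form of orthogonal matrices. Any $A\in SO(N)$ is a real normal matrix, so its eigenvalues lie on the unit circle: the complex ones occur in conjugate pairs $e^{\pm i\theta}$ and the real ones are $\pm 1$. The key structural step is to pass to a real orthonormal basis in which each conjugate pair contributes a $2\times 2$ rotation block $\br{\begin{smallmatrix}\cos\theta & -\sin\theta \\ \sin\theta & \cos\theta\end{smallmatrix}}$, each $+1$ eigenvalue a trivial $1\times 1$ block, and the $-1$ eigenvalues — even in number because $\det A = 1$ — get paired into rotation-by-$\pi$ blocks. Collecting these blocks defines the block-diagonal $\Theta\in SO(N)$, and the real orthonormal change-of-basis matrix is the required $P\in O(N)$, yielding $A = P\Theta P\trans$.

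The main obstacle is this converse, and specifically the construction of the \emph{real} orthonormal basis realising the block-diagonal form, as opposed to the complex diagonalisation $A = P\Lambda P^{-1}$ of \Eqref{eq:eig_decomp}. The care needed is in converting each complex conjugate eigenvector pair into two real orthonormal vectors spanning the same invariant plane, and in using the parity of the multiplicity of the eigenvalue $-1$ (forced by $\det A = 1$) to absorb all $-1$'s into genuine rotation blocks so that $\Theta$ is itself a product of planar rotations and hence lies in $SO(N)$. Since this is precisely the content of the cited structure theorem, I would state it as a standard fact about orthogonal matrices and defer the detailed construction to \citep{gallier2003computing}.
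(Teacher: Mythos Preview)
Your proposal is correct and follows essentially the same approach as the paper: the forward direction is verified by the identical orthogonality computation and the same multiplicativity-of-determinant argument (the paper writes it as $\det(P)\det(P\trans)=1$ rather than $\det(P)^2=1$, but these are the same fact), and the converse is attributed to the standard real canonical form of orthogonal matrices. The only difference is that the paper dismisses the converse in a single line (``follows trivially from the Jordan form of orthogonal matrices''), whereas you give a fuller sketch of the eigenvalue pairing and the parity argument for the $-1$ eigenvalues; this extra detail is a bonus, not a departure.
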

\begin{proof}
    See \appref{proof:rot-block}.
\end{proof}
The matrix $\Theta$ has $N/2$ blocks of the form
$
        \begin{pmatrix}
       \cos\theta_i & -\sin\theta_i\\
       \sin\theta_i & \cos\theta_i
       \end{pmatrix}
$
along the diagonal if $N$ is even, and $\frac{N-1}{2}$ blocks if $N$ is odd with the remaining value on the diagonal being 1, where $\theta_i\in[0,2\pi]$, $i=1, \dots, \lfloor\frac{N}{2}\rfloor$, are axis-aligned rotation angles in the space projected onto by $P$.
When computing matrix powers, the dense orthogonal matrices $P$ and $P\trans$ cancel out leaving $A^k = P \Theta^k P\trans$, where the blocks of $\Theta^k$ are of the form
$\begin{pmatrix}
       \cos k\theta_i & -\sin k\theta_i\\
       \sin k\theta_i & \cos k\theta_i
       \end{pmatrix}
$.
Therefore, by learning an orthogonal matrix $P$ and set of rotation angles $\vectheta = \{\theta_1, \dots, \theta_{\lfloor \frac{N}{2} \rfloor}\}$ directly, we can easily generate rotation matrices that are amenable to computing fast matrix powers.

It is important to note that, unlike $SO(N)$, there is no smooth surjective map from real square matrices onto $O(N)$,
the General Orthogonal group: 
$$O(N) = \{Q \; | \; Q\in \reals^{N\times N}, Q^\top Q = QQ^\top = I, \det(Q) = \pm 1\}.$$

It is therefore hard in practice to parametrise $P$ such that the space of learnable $P$ covers the entire group of orthogonal matrices. Instead, we learn a general weight matrix $M\in \reals^{N\times N}$, and use \lemref{lemma:expm} to obtain $P=\exp(M-M\trans)\in SO(N) \subset O(N)$.  While this means that $A=P\Theta P\trans$ may not be surjective onto $SO(N)$, we find it is sufficiently general to achieve good results in practice (see \Secref{sec:experiments}).

\begin{figure}[t]
    \centering
    \includegraphics[width=\textwidth]{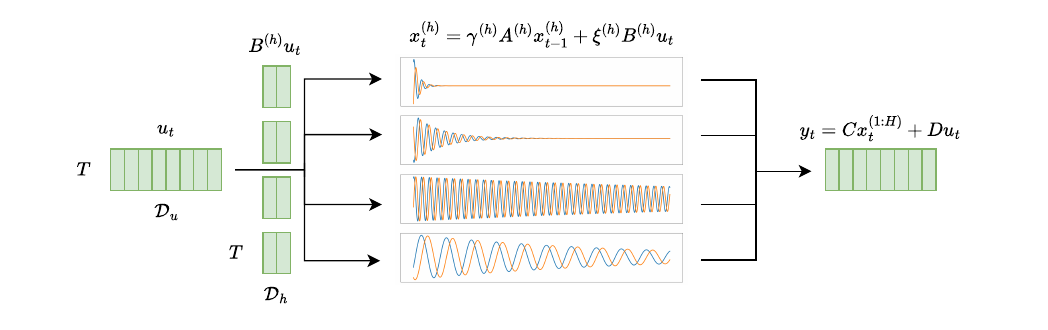}
    \caption{A visualisation of the multi-headed \model layer outlined in \Secref{sec:mulit-head}. The $\dimu$-dimensional input sequence $u_t$, $t=1,\dots,T$, is projected onto each of the $H$ heads of dimension $\dimh$ by the $B^{(h)}$ matrices. Each head then independently performs a linear recurrence with different rotations and decay scales. The outputs of each head are concatenated and mixed linearly to form the final $\dimu$-dimensional output $y_t$. }
    \label{fig:RotRNN-kernel-arch}
\end{figure}

\subsection{Normalisation} \label{sec:RotRNNnormalisation}
Now that we have a way to parametrise our recurrent state matrix for efficient recurrence, we must turn to the problem of normalising the recurrent state. This is a key ingredient of long-range recurrent networks, as it ensures the hidden state does not vanish or explode across long sequences. In prior works, this is either done implicitly by discretisation, as is the case in S4 \citep{gu2021efficiently}, or explicitly with a normalisation constant, as in the LRU \citep{orvieto2023resurrecting}.
In this work, we take an explicit approach to normalisation, leveraging the properties of the rotation matrix $A$ to derive a normalisation constant that retains a constant expected norm of the recurrent state at all times throughout training.

We define the hidden state recurrence of the \model as
\beq
\label{eq:rec:RotRNN}
    x_t = \alpha( \gamma A x_{t-1} + Bu_{t})
\eeq
where $\alpha\in \reals$ is a normalisation constant and $\gamma \in (0, 1)$ is a learnable scalar decay factor which controls the trade-off in importance between recent and distant-past input values. \\

\begin{restatable}[]{lemma}{RotRNNnorm}
\label{theorem:RotRNN-norm}
Following \cite{orvieto2023resurrecting}, let the inputs $u_t$ be sampled i.i.d., with mean 0 and variance I. Then for any constant $c$, if $\av{||x_1||^2} = c$ and $\alpha=\frac{1}{\sqrt{c\gamma^2 + \trace{B \trans B}}}$ we have that $\av{||x_{t}||^2}=c$ for all timesteps $t$.
\end{restatable}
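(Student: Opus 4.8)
The plan is to prove the invariance by induction on $t$, taking the hypothesis $\av{\|x_1\|^2}=c$ as the base case and showing that a single application of the recurrence in \Eqref{eq:rec:RotRNN} preserves the expected squared norm. First I would expand the squared norm of the update directly,
\[
\|x_t\|^2 = \alpha^2\br{\gamma A x_{t-1} + B u_t}\trans\br{\gamma A x_{t-1} + B u_t},
\]
which separates into a quadratic term in $x_{t-1}$, two symmetric cross terms, and a quadratic term in $u_t$. This is the step where the rotation structure does the essential work: because $A\in SO(N)$ we have $A\trans A = I$ (\lemref{lemma:expm}), so the leading term collapses to $\gamma^2\|x_{t-1}\|^2$ with no residual dependence on the particular angles or axes of $A$.

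Next I would take expectations term by term. The crucial observation is that $x_{t-1}$ is a deterministic function of the earlier inputs $u_1,\dots,u_{t-1}$ alone, so under the i.i.d. assumption $u_t$ is independent of $x_{t-1}$. Conditioning on $x_{t-1}$ and using $\av{u_t}=0$ then shows that both cross terms vanish in expectation. For the remaining noise term I would apply the cyclic trace identity $u_t\trans B\trans B u_t = \trace{B\trans B\,u_t u_t\trans}$ together with the covariance assumption $\av{u_t u_t\trans}=I$ to obtain $\av{u_t\trans B\trans B u_t}=\trace{B\trans B}$. Collecting the surviving contributions yields the scalar recurrence
\[
\av{\|x_t\|^2} = \alpha^2\br{\gamma^2\,\av{\|x_{t-1}\|^2} + \trace{B\trans B}}.
\]

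To finish, I would impose the invariant $\av{\|x_t\|^2}=\av{\|x_{t-1}\|^2}=c$ on this recurrence; the prescribed form of $\alpha$ is precisely the value that makes $c$ a fixed point of the resulting map, so substituting the inductive hypothesis $\av{\|x_{t-1}\|^2}=c$ and the stated $\alpha$ and checking that the right-hand side returns $c$ closes the induction. I expect the only genuinely delicate point to be the independence argument used to kill the cross terms, since the entire normalisation rests on those terms dropping out; one must argue carefully that $x_{t-1}$ is measurable with respect to the past inputs and hence independent of the fresh draw $u_t$, rather than merely uncorrelated. Everything else — the orthogonality simplification from $A\trans A=I$ and the trace computation of the noise term — is routine once this is in place.
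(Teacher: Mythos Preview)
Your proposal is correct and follows essentially the same route as the paper's proof: induction on $t$, expansion of the squared norm, orthogonality of $A$ to collapse the quadratic term, independence of $x_{t-1}$ and $u_t$ to kill the cross terms, and the trace identity for the noise term. If anything, your independence argument is slightly more careful than the paper's, which simply asserts that $x_{t-1}$ and $u_t$ are uncorrelated under the i.i.d.\ input assumption.
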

\begin{proof}
    We will prove this by induction. Under the assumption $\av{||x_1||^2} = c$, 
    we only need to prove that $\av{||x_{t-1}||^2} = c \implies \av{||x_t||^2} = c$ if $\alpha$ is as stated above.
    Taking the expected square norm of \Eqref{eq:rec:RotRNN}, we have
    \begin{align}
        \av{||x_t||^2} &= \alpha^2\br{\gamma^2 \av{||x_{t-1}||^2} + \av{u_t\trans B\trans B u_t} + 2\gamma \av{x_{t-1}\trans A\trans B u_t}} \\
        &= \alpha^2\br{\gamma^2 c + \trace{B\trans B \,\av{u_t u_t\trans}}} \\
        &= \alpha^2\br{\gamma^2 c + \trace{B\trans B}}
    \end{align}
    Where in the first line we used the orthogonal property of the rotation matrix $A A\trans = A\trans A = I$, and in the second line we used the induction assumption that $\av{||x_{t-1}||^2}=c$ and that for i.i.d inputs $x_{t-1}$ and $u_t$ are uncorrelated. Finally, setting $\alpha = \frac{1}{\sqrt{c\gamma^2 + \trace{B\trans B}}}$ gives $\av{||x_t||^2} = c$ as required.
\end{proof}

In practice, however, we find that this naïve method of normalisation is not entirely satisfactory. Unrolling \Eqref{eq:rec:RotRNN}  into its convolutional form
we obtain
\beq
x_{t} = \sum_{k=1}^{t} \alpha^{t + 1 - k} \gamma^{t-k} A^{t-k}Bu_{k} \label{eq:RotRNN:unrol-decay}
\eeq
where we can see that the normalisation constant $\alpha$ is raised to the power $t+1-k$. When $\alpha<1$ we therefore observe that the weighting of early inputs in the sequence goes to zero exponentially, and the model quickly forgets all but the very recent past. Since we desire that the recurrent decay is controlled only by $\gamma$, we instead enforce $\alpha=1$, shifting all normalisation into the matrix $B$. This requires that $\trace{B\trans B} = 1 - c\gamma^2$, which can be achieved by simply re-scaling $B$ with the coefficient 
\beq
\xi:=\sqrt{\frac{1-c\gamma^2}{\trace{B\trans B}}}.
\eeq
Our final expression for the recurrent and convolutional forms of the \model hidden state is thus
\beq
  x_t = \gamma A x_{t-1} + \xi Bu_{t} \;\;\; \Leftrightarrow \;\;\; x_{t} = 
  \xi \sum_{k=1}^{t} \gamma^{t-k} A^{t-k}Bu_{k} 
\eeq
which avoids the problem of unwanted exponential decay. In practice we find that setting $c=1$ is sufficient to achieve stable and robust normalisation, even in deep, multi-layer networks (\Figref{fig:norms}).

\subsection{Multi-Head Decay}\label{sec:mulit-head}

The price of having a constant expected hidden state norm in our derivation is that the decay factor $\gamma$ must be scalar.
We find, however, that this does not generalise well to problems that require retaining information from horizons at different scales.
We address this by running $H$ independent low-dimensional \model heads in parallel (see \Figref{fig:RotRNN-kernel-arch}), each with a unique set of $A^{(h)}\in\reals^{\dimh\times \dimh}, B^{(h)}\in\reals^{\dimh\times \dimu}, \gamma^{(h)}\in(0, 1)$ parameters, where we set $\dimh=\frac{\dimx}{H}$.
The output projection, 
\beq
y_t = C x_t^{(1:H)} + Du_t,
\eeq
where $C \in \reals^{\dimu \times \dimx}, D\in \reals^{\dimu \times \dimu}$, and $x_t^{(1:H)}$ is the concatenation of the $t^{\text{th}}$ hidden state from each head, can be viewed as a linear mixing layer. This enables the model to share information from the different rotation phase and decay horizon recurrences from each head, which is critical in tasks which require learning both long and short range dependencies between inputs. We note that, although we use $\dimh=\frac{\dimx}{H}$ in our experiments and for ease of mathematic notation, this set-up is easily scalable to any desired head dimension $\dimh\geq2$.

\section{Analysis of \model and Prior Work} \label{sec:prior_work}

\subsection{Linear Recurrent Units} \label{sec:rotssm_lru_mh}
The \model algorithm proposed in this paper is inspired in part by the LRU \citep{orvieto2023resurrecting}.
As well as sharing superficial similarities in the structure of the recurrent layer, more formal comparisons can be drawn between the two architectures. In this section, we derive a mathematical equivalence between a special case of the LRU and the \model, and compare the normalisation procedures of the two models.

\begin{figure}[!t]
    \centering
    \subcaptionbox{Mean $||x_t||_2$ across different recurrent layers during training in an 8-layer model.}{\includegraphics[width=0.48\linewidth]{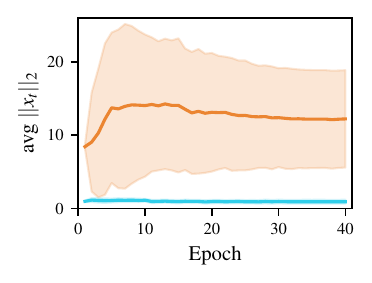}}
    \hfill
    \subcaptionbox{Mean $||x_t||_2$ in a single layer network across 5 random seeds of training.}{\includegraphics[width=0.48\linewidth]{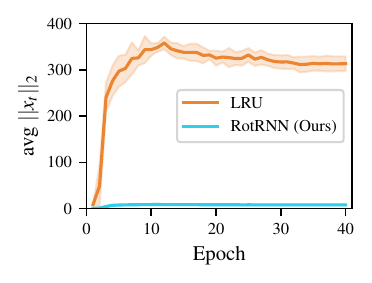}}
    \caption{Average hidden state norm across training on ListOps for the LRU \citep{orvieto2023resurrecting} and \model. The standard deviation of the means is plotted in the error bars. We note that the error bars for \model are present, but are mostly too small to be visible.}
    \label{fig:norms}
\end{figure}

\paragraph{Multihead \model as a special case of the LRU} In practice, ignoring skip connections, the LRU recurrent layer has 3 parameter matrices: $\Lambda$, a diagonal matrix of complex eigenvalues; $\tilde{B}$, a dense, complex input matrix; $\tilde{C}$, a dense, complex linear output projection. Consider the case where the learned eigenvalues of $A$ come in complex-conjugate pairs. In this case, the matrix $\Lambda$ can be written as a block diagonal matrix of $
        \begin{pmatrix}
       \nu_j\cos\theta_j & -\nu_j\sin\theta_j\\
       \nu_j\sin\theta_j & \nu_j\cos\theta_j
       \end{pmatrix}
$, with $\tilde{B}$ and $\tilde{C}$ real matrices (see App. E of \cite{orvieto2023resurrecting} for details). If one assumes that $\tilde{B} = P^\top B$ and $\tilde{C} = CP$ for some block-diagonal orthogonal matrix $P$, then, up to normalisation, this is algebraically equivalent to the multi-head \model with $H=\frac{\dimx}{2}$ heads and the dimension of each head is $\dimh = 2$. This is because the parallel headed structure of the \model can be viewed as one big block-diagonal recurrent layer, with corresponding dimensions of $B$ that project onto each head being normalised independently, and the corresponding decay factors $\gamma^{(h)}$ modulating the eigenvalue magnitude as does $\nu_j$ in the LRU.

Note that, despite being algebraically possible in the LRU theoretical framework, this exact special case of the LRU is unlikely to occur naturally in the practical algorithm proposed in \cite{orvieto2023resurrecting} (outlined in \Secref{sec:lru}). This is because the model would be required to learn that $\lambda_{1:\dimx}$ come in conjugate pairs, and the learned $P$ components of the input and output matrices to be the diagonalising matrix of the block-diagonal $A^{(1:H)}\in \reals^{\dimx \times \dimx}$ of the \model. However, we still believe that this view of the LRU as a decayed rotation-based block can help to uncover the mechanics of its linear recurrence.

\paragraph{Normalisation differences}
To compare the normalisation methods between the two algorithms, we give a brief overview of the derivation for the LRU normalisation constant found in \cite{orvieto2023resurrecting}. Assuming white-noise input, one can calculate the expected norm of the LRU hidden state: 
\begin{align}
     \av{||x_t||^2} &= \av {\left ( \sum_{i=1}^t \Lambda^i \Tilde{B} u_{t-i}\right)^{*}
     \left ( \sum_{j=1}^t \Lambda ^j \Tilde{B}u_{t-j} \right)} \nonumber \\
     &= \sum_{i=1}^t \sum_{j=1}^t \Tr \left [ \av{u_{t-i}^{*}\Tilde{B}^* {\Lambda^{i}}^*\Lambda^j \Tilde{B}u_{t-j}} \right ] \nonumber\\
     &= \sum_{i=1}^t \sum_{j=1}^t \Tr[ \Tilde{B} \av{u_{t-j} u_{t-i}^{*}}\Tilde{B}^* {\Lambda^{i}}^*\Lambda^j ]  = \sum_{i=1}^t \Tr[\Tilde{B}\Tilde{B}^* {\Lambda^{i}}^*\Lambda^i] \label{eq:lru-norm-proof}
\end{align}
Since $\Lambda$ is diagonal, \Eqref{eq:lru-norm-proof} can be re-written into a summation of terms $b_k |\lambda_k|^2$, where $b_k$ is the squared norm of each row of $\Tilde{B}$, and $\lambda_k\in\comps$ is the $k$'th diagonal entry of $\Lambda$.
Hence the expected norm becomes:
\begin{align}
     \av{||x_t||^2} &= \sum_{i=1}^t \sum_{k=1}^N b_k |\lambda_k|^{2i} = \sum_{k=1}^N b_k \sum_{i=1}^t |\lambda_k|^{2i} \xrightarrow[]{t=\infty} \sum_{k=1}^N b_k \frac{1}{1 - |\lambda_k|^2} \nonumber
\end{align}
To ensure a finite expected norm in the limit of $t\rightarrow\infty$, one can normalise the rows of $\Tilde{B}$ element-wise by $\sqrt{1 - |\lambda_k|^2}$.
This is similar to the \model normalisation outlined in \Secref{sec:RotRNNnormalisation}, as the decay parameters $\gamma^{(h)}$ in each head essentially control the eigenvalue magnitude of the $h^{\text{th}}$ recurrent state matrix.
Conversely, in \model the added $\trace{B^\top B}$ term applied independently across heads ensures that the expected norm of the recurrent state remains constant throughout the entire sequence, providing stronger guarantees than expected convergence at infinite sequence length.
We do, however, use the same white-noise input assumption as in \citep{orvieto2023resurrecting}, which we hope to be able to overcome in future work for normalisation guarantees on more general input types.
\subsection{State Space Models}
The relationship between \model and other SSMs, namely S4 \citep{gu2021efficiently} and S5 \citep{smith2023simplified}, is perhaps less formally equivalent, but we can still draw comparisons between the structures of the recurrent layers. The S4 recurrence can be viewed as a stack of single-input-single-output (SISO) SSMs, whereby independent recurrent layers operate on each channel of the vector-valued input. The outputs of these independent SSMs are concatenated and passed through a ``mixing layer" to combine information.
S5, on the other hand, uses a single multi-input-multi-output (MIMO) SSM as its recurrent layer, and as such does not require a separate mixing layer to share information across dimensions.
The input-output structure of \model sits somewhere in-between S4 and S5. The use of multiple independent \model heads outlined in \Secref{sec:RotRNN} can be viewed as a stack of independent MIMO recurrent layers, in which the user may specify both the number of heads and the dimension of each head separately. Multiplication with the output matrix $C$ can be viewed as a linear mixing layer, combining information from the hidden states of each independent head. For more details on the SISO and MIMO views of S4 and S5, we direct the reader to \cite{smith2023simplified}.

\section{Experiments}
\label{sec:experiments}

We evaluate the \model on several long sequence modelling datasets, comparing performance to state-of-the-art linear recurrent sequence models. \model performs competitively throughout, but particularly excels on very discrete input data (such as text). We also empirically analyse our normalisation procedure compared to that of the LRU across both deep and single-layer networks.

\begin{table*}[t]
\caption{Test accuracy on the LRA benchmark tasks. We follow the standard training procedures from \cite{gu2021efficiently}. Unless otherwise specified, we report the results of baseline methods from their respective citations.
}
\label{tab:lra-results}
\begin{center}
\begin{small}
\begin{tabular}{lccccccc}
\toprule
Model & ListOps & Text & Retrieval & Image & Pathfinder & Path-X & Avg. \\
(Input Length) & (2,048) & (4,096) & (4,000) & (1,024) & (1,024) & (16,384) & \\
\midrule
S4 \citep{gu2021efficiently} & 59.6 & 86.8 & 90.9 & \textbf{91.1} & 94.2 & 96.4 & 86.5 \\
S4D \citep{gu2022parameterization} & 60.5 & 86.2 & 89.5 & 88.2 & 93.1 & 92.0 & 84.9 \\
Liquid-S4 & \textbf{62.8} & 89.0 & 91.2 & 89.5 & 94.8 & 96.7 & 87.3 \\
\citep{hasaniliquid} & & & & & & & \\
S5 \citep{smith2023simplified} & 62.2 & 89.3 & \textbf{91.4} & 90.1 & \textbf{95.3} & \textbf{98.6} & \textbf{87.8} \\
LRU \citep{axler2024linear} & 60.2 & 89.4 & 89.9 & 89.0 & 95.1 & 94.2 & 86.3 \\
LRU (Our Reprod.) & 57.9 & 89.4 & 89.4 & 85.2 & 90.0 & 92.8 & 84.1 \\ 
\midrule
\model (Ours) & 61.1 & \textbf{89.6} & 89.9 & 85.9 & 93.0 & 89.2 & 84.8 \\
\bottomrule
\end{tabular}
\end{small}
\end{center}
\label{tab:lra}
\end{table*}

\subsection{Long Range Arena}
We evaluate the performance of \model on Long Range Arena (LRA) \citep{tay2020long}, a set of 6 sequence modelling tasks with sequence lengths between 1K and 16K tokens and varying data modalities. \tabref{tab:lra-results} shows the results for \model and other linear recurrent models for comparison, reporting the test accuracy at the highest validation accuracy throughout training. Overall, we find that \model performs competitively with other state-of-the-art linear recurrent models.
In particular, we find that our model performs best on domains with more discrete input data, such as ListOps, Text and Retrieval, achieving the highest score of all the baselines in the IMDB classification task (Text). However, we also note that \model falls short of some of the baselines on the pixel-level image tasks, such as Path-X and Cifar. 

\paragraph{Hidden State Norms}
We plot the mean hidden state norm of the recurrent layers of the LRU and \model throughout training on the ListOps dataset in \Figref{fig:norms}. We find that the hidden states of the \model have an almost constant magnitude throughout training, with very little variance across layer depth or random initialisations.
To contrast this, the LRU hidden state norms vary wildly across layer depth, and take far longer to converge (if ever) to a reasonably constant magnitude throughout training.
Moreover, we see that the size and variance of the norms are significantly larger that those of the \model across random seeds in a single-layer network.

\subsection{Raw Speech Classification}
Since LRA \citep{tay2020long} is partly a synthetic benchmark, we further evaluate \model on a more natural long-sequence classification task: the Speech Commands dataset \citep{warden2018speech}. This dataset contains 1s waveforms of 35 spoken English words, sampled at 16kHz. The task is to classify the word from its given sampled waveform. The results are displayed in \tabref{tab:sc-results}. We find that \model performs identically well to the LRU \citep{orvieto2023resurrecting}, with a similar number of parameters. It also remains very competitive with other, more theoretically complex deep state space models.

\begin{table*}[!h]
\caption{Test accuracy on the 35-way Speech Commands classification task \citep{warden2018speech}. Unless otherwise specified, we report the results of baseline methods from their respective citations.}
\label{tab:sc-results}
\begin{center}
\begin{small}
\begin{tabular}{lcc}
\toprule
Model & Params. &  16kHz\\
(Input Length) &  & (16,000) \\
\midrule
S4 \citep{gu2021efficiently} & 307K & 96.1  \\
S4D \citep{gu2022parameterization} & 306K & 95.8  \\
Liquid-S4 \citep{hasaniliquid} & 224K & \textbf{96.8} \\
S5 \citep{smith2023simplified} & 280K & \textbf{96.8} \\
LRU (Our Reprod.) & 283K & 95.2  \\
\midrule
\model (Ours) & 284K & 95.2  \\
\bottomrule
\end{tabular}
\end{small}
\end{center}
\end{table*}

\section{Related Work}
\paragraph{Orthogonal Recurrent Networks}
The use of orthogonal and rotation matrices in recurrent networks has been explored previously in non-linear RNNs. Unitary RNNs (uRNNs) \citep{arjovsky2016unitary, 10.5555/3305381.3305560} prevent blow-up in long sequences by parametrising recurrent matrices with unitary matrices (an extension of orthogonality to the complex field) to ensure an eigenvalue magnitude of 1. This idea has since been applied without the need for complex numbers \citep{helfrich2018orthogonal}, and has inspired works replacing recurrent operators in LSTMs with rotations \citep{dangovski2019rotational, velici2021rotlstm}. These methods, however, still suffer from the drawbacks of classical non-linear RNNs, such as inefficient computation of recurrent states during training when compared to associative-scan based linear models.

\paragraph{Building on Linear Recurrent Models}
Linear recurrent layers have recently been used as building blocks to construct more complex long sequence models. Two representative examples include Griffin \citep{de2024griffin} built on top of the LRU, and Mamba \citep{gu2023mamba} built on top of S4. The key to the success of both these models is \textit{gating} -- the ability to construct recurrent state matrices based on the current inputs to selectively control information flow. We believe that the \model could be used as a drop-in replacement for the LRU in Griffin, or be used to perform alternative gating strategies with input dependent rotations, but we leave this direction for future work. 

\section{Conclusions and Future Work}
In this paper we propose \model, a linear recurrent model that utilises the convenient properties of rotation matrices. We show that \model performs competitively with the state-of-the-art on long-range sequence modelling benchmarks, while providing a conceptually simple and efficient algorithm. \model remains faithful to its theoretical derivation throughout training, with a robust normalisation procedure that does not rely on complex initialisation.

In addition, we hope that the concrete comparisons between our model and the LRU drawn in \Secref{sec:prior_work} can shed new light on the inner workings of the LRU and similar algorithms as multi-headed rotation-based linear recurrent networks. We point future investigations towards integrating \model into more complex architectures to test its downstream capacity on other domains, and implementing input dependent rotation transitions for gated rotational recurrence.


\bibliography{references}
\bibliographystyle{iclr2025_conference}

\appendix

\section{Proofs} \label{app:proofs}
In this section we provide proofs of all theorems and lemmas not proven in the main text.

\subsection{Proof of \lemref{lemma:expm}} \label{proof:expm}
To prove \lemref{lemma:expm}, we must first state the formal definition of $SO(N)$.\\

\begin{definition}[{\normalfont Special Orthogonal Group}]
    The Special Orthogonal group, $SO(N)$, is defined as
    $$SO(N) = \left\{A\in\left(\reals^{N\times N}, *\right) \; | \; A\trans A = A A\trans = I, \det(A) = 1\right\}$$
    where the group operation $*$ denotes matrix multiplication.
\end{definition}
Hence, to prove \lemref{lemma:expm}, we must simply show that any matrix $M\in \reals^{N\times N}$ under the respective transformation is both orthogonal and has determinant 1.\\

\expmlemma*
\begin{proof}
    To prove orthogonality of $A$, we will use the well-known fact that for two square matrices $P, Q\in \reals^{N\times N}$, if $PQ=QP$ then $\exp(P)\exp(Q) = \exp(P+Q)$. Since $S$ is skew-symmetric, we have that $S\trans = -S$ and hence $SS\trans = S\trans S = -S^2$. Moreover, since the matrix exponential is defined by a power series, we have that $\exp(S)\trans = \exp(S\trans)$. Putting these two together we get 
    \beq
    A A\trans = \exp(S)\exp(S)\trans = \exp(S) \exp(S\trans) = \exp(S + S\trans) = \exp(S - S) = I
    \eeq
    and clearly the same is true for $A\trans A$. Hence, $A$ is orthogonal.

    To prove that determinant is 1, we use Jacobi's formula, which states that for any square matrix $S$, $\det(\exp(S)) = \exp(\trace{S})$. Since $S$ is skew-symmetric, we have that $\trace{S} = 0$, and hence 
    \beq
    \det(A) = \det(\exp(S)) = \exp(\trace{S}) = \exp(0) = 1
    \eeq
\end{proof}

\subsection{Proof of \lemref{lemma:gen-rot-block-decomp}} \label{proof:rot-block}
\rotdecomp*
\begin{proof}
    We provide only a proof for the first statement, as it is the only one functionally relevant for the \model to be valid, but the converse statement follows trivially from the Jordan form of orthogonal matrices. For the first statement, we must again show that $A$ is orthogonal with determinant 1. The first condition is satisfied due to the orthogonality of $P$ and $\Theta $ as follows
    \beq
    AA\trans = P\Theta P\trans P \Theta \trans P\trans = P\Theta \Theta \trans P\trans = P P\trans = I
    \eeq
    and similarly for $A\trans A$.
    
    The second condition is satisfied by noting that all matrices in $SO(N)$ have determinant 1, all matrices in $O(N)$ have determinant $\in \{\pm 1\}$, and $\det(P) = \frac{1}{\det(P\trans)}$ for orthogonal $P$. Hence,
    \beq
    \det(A) = \det(P\Theta P\trans) = \det(P)\det(\Theta )\det(P\trans) = 1
    \eeq
\end{proof}

\section{Implementation Details}\label{app:imp-deets}
We implement the \model in JAX due to its associative scan operator for fast, parallel computation of the recurrent states across long sequence. \Figref{fig:RotRNN-architecture}  provides an overview of the entire \model architecture. In the following subsections, we will discuss implementation details that make our code efficient and provide the hyperparameters used in our experiments. A simplified JAX implementation of \model is given in \appref{app:implementation}.

\subsection{Rotation Matrix Multiplication}
We use the associative scan operation to implement the \model layer with an efficient matrix multiplication algorithm, making use of the special structure of our rotation matrix. In particular, since our rotation matrix factorises into $A=P\Theta P\trans$, the orthogonality of $P$ means we only need to multiply by the block-diagonal matrix $\Theta $ in the associative scan. Multiplying a vector by a block diagonal matrix can be implemented efficiently, $\Theta x$ can be equivalently computed as 
\begin{align}
    \Theta x = \begin{bmatrix}
        x_1 \\
        x_2 \\
        x_3 \\
        x_4 \\
        \vdots \\
        x_{\dimx-1} \\
        x_{\dimx}
    \end{bmatrix} \odot \begin{bmatrix}
        \cos{\theta_1} \\
         \cos{\theta_1} \\
         \cos{\theta_2} \\
         \cos{\theta_2} \\
        \vdots \\
       \cos{\theta_{\dimx//2}} \\
       \cos{\theta_{\dimx//2}}
        \end{bmatrix} + \begin{bmatrix}
        -x_2 \\
        x_1 \\
        -x_4 \\
        x_3 \\
        \vdots \\
        -x_{\dimx} \\
        x_{\dimx-1}
    \end{bmatrix} \odot \begin{bmatrix}
        \sin{\theta_1} \\
         \sin{\theta_1} \\
         \sin{\theta_2} \\
         \sin{\theta_2} \\
        \vdots \\
       \sin{\theta_{\dimx//2}} \\
       \sin{\theta_{\dimx//2}}
        \end{bmatrix}
\end{align}
where $\odot$ denotes element-wise multiplication.

\subsection{Hyperparameters}
In our experiments we use bidirectional \model layers for Pathfinder and Path-X datasets, while for the rest of the datasets we use unidirectional layers. In the bidirectional layer we reuse the parameters $P$, $\Theta $ and $B$, while the matrix $C$ is different for the forward and the backward pass. We use batch normalisation for all of our experiments, and the number of layers $L=6$ and number of heads $H=32$ were the same for all experiments. The rest of the hyperparameters are described in~\Tabref{tab:hyperparameters-lra}. We select initial hyperparameters from the literature surrounding Linear Recurrent Networks and State Space Models, and performed small hyperparameter sweeps for ListOps, Text, Retrieval, Image and Speech Commands, grid-searching the hyperparameter spaces for $\gamma_{\text{init}}$, $\theta_{\text{init}}$ and learning rate. We tuned the hyperparameters for Pathfinder and Path-X manually. The total number of parameters in the resulting model is very similar to the baseline models \citep{gu2021efficiently, smith2023simplified, orvieto2023resurrecting, gupta2022diagonal, hasaniliquid}.

\begin{table*}[!ht]
    \caption{Hyperparameters used for training on LRA and Speech Commands. $D$=model dimension, $N$=recurrent layer dimension, $GLR$=global learning rate, $LR$=recurrent layer learning rate, $B$=batch size, $WD$=weight decay, $\gamma_{\text{init}}$=initialisation range for $\gamma$, $\theta_{\text{init}}$=initialisation range for $\theta$.
    }
    \label{tab:hyperparameters-lra}
    \begin{center}
    \begin{small}
    \begin{tabular}{lcccccccccc}
    
    \toprule
    Dataset  & $D$ & $N$ & $GLR$ & $LR$ & $B$ & $WD$ & Drop. & Iters. & $\gamma_{\text{init}}$ & $\theta_{\text{init}}$\\
    \midrule
    ListOps & 128 & 256 & 1e-3 & 1e-3 & 32 & 0.05 & 0.0 & 80K &[0.5, 0.999] & $[0,\pi/100]$\\
    Text & 256 & 192 & 1e-3 & 1e-3 & 32 & 0.05 & 0.1 & 50K &[0.5, 0.8] & $[0,\pi/10]$ \\
    Retrieval & 128 & 256 & 1e-4 & 1e-5 & 32 & 0.01 & 0.1 & 50K & [0.5, 0.999] & $[0,2\pi]$ \\
    Image & 512 &384 & 4.5e-3 & 1e-3 & 50 & 0.05 & 0.1 & 250K &[0.99, 0.999] & $[0,2\pi]$ \\
    Pathfinder & 192 & 256 & 4.5e-3 & 1e-3 & 64 & 0.03 & 0.05 & 500K & [0.1, 0.9999] & $[0,\pi/10]$\\
    PathX & 192 & 256 & 4.5e-3 & 1e-3 & 32 & 0.03 & 0.2 & 250K & [0.999, 0.9999] & $[0,\pi/10]$\\
    Sp. Cmds. & 96 & 128 & 8e-3 & 1e-3 & 16 & 0.04 & 0.1 & 212K & [0.1, 0.9999] & $[0, \pi/10]$ \\
    \bottomrule
    \end{tabular}
    \end{small}
    \end{center}
\end{table*}

\subsection{Parametrising and Initialising Variables}
Inspired by \cite{orvieto2023resurrecting}, to ensure $\gamma$ remains in $(0, 1)$ throughout training, instead of directly learning $\gamma$ we learn parameter $\gamma_{\log}$ s.t. $\gamma = e^{-e^{\gamma_{\log}}} \in (0, 1)$. We initialise $\gamma$ to be within the range $[\gamma_{\min}, \gamma_{\max}]$, and $\theta$ within $[0, \theta_{\max}]$. To initialise the input and output matrices $B$ and $C$, we use the Glorot initialisation procedure \citep{glorot2010understanding}. The weight matrix for the orthogonal $P$ is initialised as a random normal matrix, and we initialise $D$ as a random normal vector applied element-wise to $u_t$.

\section{JAX \model Implementation} \label{app:implementation}

Here we present a simplified version of the \model layer written in JAX.

\begin{lstlisting}[language=Python]
import jax
import numpy as np
from jax import numpy as jnp

parallel_scan = jax.lax.associative_scan


def forward(rotrnn_params, input_sequence):
    """Forward pass through the RotRNN layer"""

    thetas, gamma_log, M, B, C, D = rotrnn_params
    gammas = jnp.exp(-jnp.exp(gamma_log))

    T, dim_u = input_sequence.shape

    # compute \xi and normalise B
    B_T_B = jax.vmap(lambda a, b: a @ b)(B.transpose(0, 2, 1), B)
    B_T_B_trace = jnp.trace(B_T_B, axis1=1, axis2=2)
    xi = jnp.sqrt((1 - gammas.squeeze() ** 2) / B_T_B_trace)
    B_norm = jnp.einsum("H, HTD -> HTD", xi, B)

    # create orthogonal matrix P from weight matrix M
    P = jax.scipy.linalg.expm(M - M.transpose(0, 2, 1))

    # project inputs onto heads
    x = jnp.einsum("HDi,Ti->HTD", B_norm, input_sequence)

    # project with P^T
    x = jnp.einsum("HDi, HTi -> HTD", P.transpose(0, 2, 1), x)

    # compute recurrence parallelised over heads
    gammas = jnp.repeat(gammas[:, None], repeats=T, axis=1)
    thetas = jnp.repeat(thetas[:, None], repeats=T, axis=1)
    
    
    
    rec_fn = jax.vmap(
        lambda a, b, c: parallel_scan(binf, (a, b, c)),
        in_axes=(0, 0, 0),
        out_axes=0,
    )
    x = rec_fn(gammas, thetas, x)[2]

    # project back with P
    x = jnp.einsum("HDi, HTi -> HTD", P, x)

    # concatenate heads
    x = x.transpose(1, 0, 2).reshape(T, -1)

    # apply output projection/head mixing and skip connection
    y = jax.vmap(lambda a: C @ a)(x) + D * input_sequence
    return y


def init_params(H, dim_x, dim_u, gamma_min, gamma_max, theta_max):
    """Initialise the learnable parameters"""

    dim_h = dim_x // H

    # random initialisation of \theta in [0, theta_max]
    theta = np.random.uniform(0, theta_max, (H, dim_h // 2))

    # constrained initialisation of \gamma in [gamma_min, gamma_max]
    u1 = np.random.uniform(size=(H, 1))
    gamma_log = jnp.log(
        -0.5 * jnp.log(u1 * (gamma_max**2 - gamma_min**2) + gamma_min**2)
    )

    # Glorot initialised input/output matrices
    B = np.random.normal(size=(H, dim_h, dim_u)) / np.sqrt(dim_u)
    C = np.random.normal(size=(dim_u, dim_x)) / np.sqrt(dim_x)

    # Orthogonal weight matrix M
    M = np.random.normal(size=(H, dim_h, dim_h))

    # D is random vector applied element-wise to u
    D = np.random.normal(size=(dim_u))

    return theta, gamma_log, M, B, C, D


def binf(a, b):
    """Binary function for the parallel scan"""
    gamma_i, thetas_i, acc_i = a
    gamma_j, thetas_j, acc_j = b

    # get off diagonal terms [-x2, x1, -x4, x3,...]
    # these will be multiplied by sin(\theta)
    off_diags = jnp.stack([-acc_i[..., 1::2], acc_i[..., 0::2]], axis=-1)
    off_diags = off_diags.reshape(acc_i.shape)

    # duplicate \theta [\theta_1, \theta_1, \theta_2, \theta_2,...]
    theta = jnp.repeat(thetas_j, repeats=2, axis=-1)

    # compute sine and cosine elements of the output
    sin = jnp.sin(theta) * off_diags
    cos = jnp.cos(theta) * acc_i
    acc = gamma_j * (cos + sin)

    return (gamma_i * gamma_j, thetas_i + thetas_j, acc + acc_j)


\end{lstlisting}


\end{document}